\definecolor{olive}{rgb}{0.6, 0.6, 0.2}
\definecolor{sand}{rgb}{0.8666666666666667, 0.8, 0.4666666666666667}
\definecolor{wine}{rgb}{0.5333333333333333, 0.13333333333333333, 0.3333333333333333}
\definecolor{deblue}{RGB}{11,132,147}
\definecolor{ocra}{RGB}{204, 119, 34}
\newtheorem{lemma}{Lemma}[section]
\DeclareMathOperator*{\argmin}{arg\,min}
\title{Epistemic Uncertainty and Observation Noise \\ with the Neural Tangent Kernel}
\author{
    \textbf{Sergio Calvo-Ordoñez}$^{1,2*}$, \textbf{Konstantina Palla}$^{3}$, \textbf{Kamil Ciosek}$^{3}$\\
    \vspace{-0.13in}
    $^{1}$Mathematical Institute, University of Oxford\\
    $^{2}$Oxford-Man Institute of Quantitative Finance, University of Oxford\\
    $^{3}$Spotify
}
\begin{document}

\maketitle

\vspace{-0.25in}
\begin{abstract}
\vspace{-0.1in}
Recent work has shown that training wide neural networks with gradient descent is formally equivalent to computing the mean of the posterior distribution in a Gaussian Process (GP) with the Neural Tangent Kernel (NTK) as the prior covariance and zero aleatoric noise \parencite{jacot2018neural}. In this paper, we extend this framework in two ways. First, we show how to deal with non-zero aleatoric noise. Second, we derive an estimator for the posterior covariance, giving us a handle on epistemic uncertainty. Our proposed approach integrates seamlessly with standard training pipelines, as it involves training a small number of additional predictors using gradient descent on a mean squared error loss. We demonstrate the proof-of-concept of our method through empirical evaluation on synthetic regression.
\end{abstract}

\vspace{-0.15in} 
\section{Introduction}
\vspace{-0.12in} 
\citeauthor*{jacot2018neural} have studied the training of wide neural networks, showing that gradient descent on a standard loss is, in the limit of many iterations, formally equivalent to computing the posterior mean of a Gaussian Process (GP), with the prior covariance specified by the Neural Tangent Kernel (NTK) and with zero aleatoric noise. Crucially, this insight allows us to study complex behaviours of wide networks using Bayesian nonparametrics, which are much better understood.

We extend this analysis by asking two research questions. First, we ask if a similar equivalence exists in cases where we want to do inference for arbitrary values of aleatoric noise. This is crucial in many real-world settings, where measurement accuracy or other data-gathering errors mean that the information in our dataset is only approximate. Second, we ask if it is possible to obtain an estimate of the posterior covariance, not just the mean. Since the posterior covariance measures the epistemic uncertainty about predictions of a model, it is crucial for problems that involve out-of-distribution detection or training with bandit-style feedback.  

We answer both of these research questions in the affirmative. Our posterior mean estimator takes the aleatoric noise into account by adding a simple squared norm penalty on the deviation of the network parameters from their initial values, shedding light on regularization in deep learning. Our covariance estimator can be understood as an alternative to existing methods of epistemic uncertainty estimation, such as dropout \parencite{gal2016dropout, srivastava2014dropout}, the Laplace approximation \parencite{daxberger2021laplace, ritter2018scalable}, epistemic neural networks \parencite{osband2023epistemic}, deep ensembles \parencite{tibshirani1993introduction, lakshminarayanan2017simple} and Bayesian Neural Networks \parencite{blundell2015weight, kingma2013auto}. Unlike these approaches, our method has the advantage that it can approximate the NTK-GP posterior arbitrarily well.
\vspace{-0.1in}
\paragraph{Contributions} We derive estimators for the posterior mean and covariance of an NTK-GP with non-zero aleatoric noise, computable using gradient descent on a standard loss. We evaluate our results empirically on a toy repression problem. 

\vspace{-0.1in}
\section{Preliminaries}
\vspace{-0.1in} 
\paragraph{Gaussian Processes} Gaussian Processes (GPs) are a popular non-parametric approach for modeling distributions over functions \parencite{williams2006gaussian}. Given a dataset of input-output pairs $\{(\mathbf{x}_i, y_i)\}_{i=1}^N$, a GP represents uncertainty about function values by assuming they are jointly Gaussian with a covariance structure defined by a kernel function $k\mathbf{(x, x')}$. The GP prior is specified as $f(\mathbf{x}) \sim \mathcal{GP}(m(\mathbf{x}), k(\mathbf{x}, \mathbf{x}'))$, where $m(\mathbf{x})$ is the mean function and $k(\mathbf{x}, \mathbf{x}')$ is the kernel. Assuming $y_i \sim \mathcal{N}(f(\mathbf{x}), \sigma^2)$ and given new test points $\mathbf{x'}$, the posterior mean and covariance are given by:
\begin{align}
    &\boldsymbol{\mu}_p(\mathbf{x'}) = m(\mathbf{x'}) + \mathbf{K(x', x)}^\top (\mathbf{K(x, x)} + \sigma^2 \mathbf{I})^{-1} (\mathbf{y} - m(\mathbf{x})), \label{eq-pmean}\\
    &\mathbf{\Sigma}_p(\mathbf{x'}) = \mathbf{K(x', x')} - \mathbf{K(x', x)}^\top (\mathbf{K(x, x)} + \sigma^2 \mathbf{I})^{-1} \mathbf{K(x', x)} \label{eq-pcov},
\end{align}
where \(\mathbf{K(x, x)}\) is the covariance matrix computed over the training inputs, \(\mathbf{K(x', x)}\) is the covariance matrix between the test and training points, and \(\sigma^2\) represents the aleatoric (or observation) noise.

\vspace{-0.05in}
\paragraph{Neural Tangent Kernel.} The Neural Tangent Kernel (NTK) characterizes the evolution of wide neural network predictions as a linear model in function space. Given a neural network function $f(\mathbf{x}; \theta)$ parameterized by $\theta$, the NTK is defined through the Jacobian $J(\mathbf{x}) \in \mathbb{R}^{N \times p}$, where $J(\mathbf{x}) = \frac{\partial f(\mathbf{x}; \theta)}{\partial \theta}$, $N$ is the number of data points and $p$ is the number of parameters. The NTK at two sets of inputs $\mathbf{x}$ and $\mathbf{x'}$ is given by:
\begin{align}
\label{eq-ntk}
\mathbf{K(x, x')} = J(\mathbf{x})J(\mathbf{x'})^\top.
\end{align}
Interestingly, as shown by \cite{jacot2018neural} the NTK converges to a deterministic kernel and remains constant during training in the infinite-width limit. We call a GP with the kernel \eqref{eq-ntk} the NTK GP.

\section{Method}
We now describe our proposed process of doing inference in the NTK-GP. Our procedure for estimating the posterior mean is given in Algorithm \ref{alg-mean}, while the procedure for the covariance is given in Algorithm \ref{alg-cov}. Note that our process is scaleable because both algorithms only use gradient descent, rather than relying on a matrix inverse in equations \eqref{eq-pmean} and \eqref{eq-pcov}. While Algorithm \ref{alg-cov} relies on the computation of the partial SVD of the Jacobian, we stress that efficient ways of doing so exist and do not require ever storing the full Jacobian. We defer the details of the partial SVD to Appendix \ref{appdx-svd}. We describe the theory that justifies our posterior computation in sections \ref{subs-an} and \ref{subs-cov}. We defer the discussion of literature to Appendix \ref{appdx-related-work}.

\begin{algorithm}[ht]
\caption{Algorithm for Computing the Posterior Mean in the NTK-GP}
\label{alg-mean}
\begin{algorithmic}
\Procedure{Train-Posterior-Mean}{$x_i, y_i$, $\theta_0$}
\State $\hat{y}_i \gets y_i + f(x_i; \theta_0)$ \Comment{Shift the targets to get zero prior mean (Lemma \ref{lemma-zero-prior}).}
\State $L \gets \frac{1}{N} \sum_{i=1}^N (\hat{y}_i - f(x_i; \theta))^2 + \beta_N || \theta - \theta_0 ||^2_2$ \Comment{Equation \eqref{loss-mean}}
\State minimize $L$ with gradient descent wrt.\ $\theta$ until convergence to $\theta^\star$
\State \Return $\theta^\star$ \Comment{Return the trained weights.}
\EndProcedure \\

\Procedure{Query-Posterior-Mean}{$x'_j$, $\theta^\star$, $\theta^0$} \Comment{$j=1,\dots,J$}
\State \Return $f(x'_1; \theta^\star) - f(x'_1; \theta^0), \dots, f(x'_J; \theta^\star) - f(x'_1; \theta^0)$
\EndProcedure
\end{algorithmic}
\end{algorithm}

\subsection{Aleatoric Noise}
\label{subs-an}

\paragraph{Gradient Descent Converges to the NTK-GP Posterior Mean} We build on the work of \cite{jacot2018neural} by focusing on the computation of the mean posterior in the presence of \textbf{non-zero aleatoric noise}. We show that optimizing a regularized mean squared error loss in a neural network is equivalent to computing the mean posterior of an NTK-GP with non-zero aleatoric noise. In the following Lemma, we prove that for a sufficiently long training process, the predictions of the trained network converge to those of an NTK-GP with aleatoric noise characterized by $\sigma^2 = N\beta_N$. This is a similar result to \cite{hu2020simpleeffectiveregularizationmethods}, but from a Bayesian perspective rather than a frequentist generalization bound. Furthermore, our proof (see Appendix \ref{appdx-proof}) focuses on explicitly solving the gradient flows for test and training data points in function space.

\begin{restatable}{lemma}{propmean}\label{lemma: 2.1}
Consider a parametric model \( f(x; \theta) \) where \( x \in \mathcal{X} \subset \mathbb{R}^N \) and \(\theta \in \mathbb{R}^p\), initialized under some assumptions with parameters $\theta_0$. Minimizing the regularized mean squared error loss with respect to $\theta$ to find the optimal set of parameters $\theta^*$ over a dataset $(\mathbf{x}, \mathbf{y})$ of size $N$, and with sufficient training time ($t \rightarrow \infty$):
\begin{equation}
    \theta^* = \argmin_{\theta \in \mathbb{R}^p} \frac{1}{N} \sum_{i=1}^N (y_i - f(x_i; \theta))^2 + \beta_N || \theta - \theta_0 ||^2_2,
    \label{loss-mean}
\end{equation}
\noindent is equivalent to computing the mean posterior of a Gaussian process with non-zero aleatoric noise, $\sigma^2 = N\beta_N$, and the NTK as its kernel:
\begin{equation}
    f(\mathbf{x'}; \theta_\infty) = f(\mathbf{x'}; \theta_0) + \mathbf{K(x', x)}(\mathbf{K(x, x)} + N\beta_N\mathbf{I})^{-1}(\mathbf{y} - f(\mathbf{x}; \theta_0)).
\end{equation}
\end{restatable}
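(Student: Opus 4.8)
The plan is to exploit the defining feature of the NTK regime referenced by the phrase ``under some assumptions'': in the infinite-width limit the network is well approximated by its first-order Taylor expansion about initialization, $f(\mathbf{x};\theta) \approx f(\mathbf{x};\theta_0) + J(\mathbf{x})(\theta - \theta_0)$, where the Jacobian $J(\mathbf{x}) = \partial f(\mathbf{x};\theta)/\partial\theta$ evaluated at $\theta_0$ stays essentially constant throughout training. In this linearized model the loss \eqref{loss-mean} is a strongly convex quadratic in the displacement $u = \theta - \theta_0$ (the regularizer contributes the strong convexity), so the gradient flow has a unique limit and it suffices to locate its fixed point.

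Following the function-space strategy the authors flag, I would not solve for $\theta_\infty$ in parameter space but instead track the predictions directly. Writing the gradient flow $\dot\theta = -\nabla_\theta L$ and differentiating the training-set outputs $g_t = f(\mathbf{x};\theta_t) - f(\mathbf{x};\theta_0)$, the linearization collapses every parameter-space quantity into kernel matrices through the identity $J(\mathbf{x})J(\mathbf{x})^\top = \mathbf{K(x,x)}$, producing a closed linear ODE
\begin{equation}
\dot g_t = \tfrac{2}{N}\mathbf{K(x,x)}\,r_t - 2\beta_N g_t,
\end{equation}
where $r_t = \mathbf{y} - f(\mathbf{x};\theta_t)$ is the current residual. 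Repeating the computation for the test outputs $g'_t = f(\mathbf{x'};\theta_t) - f(\mathbf{x'};\theta_0)$ yields an analogous equation driven by $\mathbf{K(x',x)}$ and by the training residual, coupling the two flows.

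The remaining work is algebraic. Setting $\dot g_t = 0$ and solving the linear system gives the stationary training prediction $g_\infty = \mathbf{K(x,x)}(\mathbf{K(x,x)} + N\beta_N\mathbf{I})^{-1}(\mathbf{y} - f(\mathbf{x};\theta_0))$, hence a limiting residual $r_\infty = N\beta_N(\mathbf{K(x,x)} + N\beta_N\mathbf{I})^{-1}(\mathbf{y} - f(\mathbf{x};\theta_0))$; substituting $r_\infty$ into the test-output fixed point and simplifying recovers precisely the claimed expression, with the aleatoric variance $\sigma^2$ realized as $N\beta_N$. Convergence of the flow to these fixed points is automatic because the governing matrix $\tfrac{2}{N}J(\mathbf{x})^\top J(\mathbf{x}) + 2\beta_N\mathbf{I}$ is positive definite with spectrum bounded below by $2\beta_N > 0$, so every mode decays exponentially.

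I expect the genuine difficulty to lie not in this calculation but in justifying the linearization itself: one must show that the Jacobian stays constant and the Taylor remainder remains negligible along the entire trajectory, uniformly in $t$, as the width tends to infinity. This is exactly where the infinite-width assumptions and the machinery of \cite{jacot2018neural} are needed, and it is the step I would treat most carefully—in particular verifying that the added penalty $\beta_N\|\theta - \theta_0\|^2_2$ keeps the parameters close enough to $\theta_0$ that the lazy-training approximation is not violated, rather than driving them into a regime where the kernel drifts.
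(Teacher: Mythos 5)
Your proposal is correct and shares its skeleton with the paper's proof: both linearize the network around $\theta_0$ in the NTK regime, push the gradient flow into function space via $J(\mathbf{x})J(\mathbf{x})^\top = \mathbf{K(x,x)}$ and $J(\mathbf{x'})J(\mathbf{x})^\top = \mathbf{K(x',x)}$, and obtain the same coupled linear ODEs for the training and test outputs. Where you diverge is the endgame. The paper solves both ODEs explicitly — a matrix-exponential solution for the training outputs, then an integrating factor and an explicit integral for the test outputs driven by the time-dependent training residual — and only then sends $t \to \infty$. You instead read off the equilibria ($\dot g = 0$, $\dot g' = 0$) and justify convergence separately from the strong convexity of the linearized objective, whose Hessian $\tfrac{2}{N}J^\top J + 2\beta_N \mathbf{I} \succcurlyeq 2\beta_N \mathbf{I}$ makes every mode decay; since the test outputs are a fixed linear function of $\theta_t - \theta_0$, convergence in parameter space transfers to them, so the fixed-point computation is legitimate. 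Your route is shorter and less error-prone: note that it cleanly yields the $N\beta_N$ factor of the lemma statement, whereas the paper's displayed derivation silently drops a factor of $N$ when pulling $2\alpha/N$ out of the gradient (writing $\beta_N(\theta_t-\theta_0)$ where $N\beta_N(\theta_t-\theta_0)$ should appear), so its intermediate and final displays show $\beta_N\mathbf{I}$ rather than $N\beta_N\mathbf{I}$. What the paper's explicit integration buys in exchange is the full finite-$t$ trajectory, i.e., the rate at which the predictions approach the GP posterior mean, which a pure fixed-point argument does not provide. Both you and the paper defer the genuinely hard step — uniform-in-time validity of the linearization, and the fact that the $\ell_2$ penalty keeps the parameters in the lazy regime — to the cited infinite-width results, so no gap there relative to the paper's own standard of rigor.
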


\paragraph{Zero Prior Mean} In many practical scenarios, it is desirable to start with zero prior mean rather than with a prior mean that corresponds to random network initialization. To accommodate this, we introduce a simple yet effective transformation of the data and the network outputs, to be applied together with \ref{lemma: 2.1}. We summarize it into the following lemma (see Appendix \ref{appdx-proof} for proof):

\begin{restatable}{lemma}{onlylemma}
\label{lemma-zero-prior}
    Consider the computational process derived in Lemma \ref{lemma: 2.1}. Define shifted labels $\Tilde{\mathbf{y}}$ and predictions $\Tilde{f}(\mathbf{x}; \theta_\infty)$ as follows::
    \begin{equation*}
        \Tilde{\mathbf{y}} = \mathbf{y} + f(\mathbf{x}; \theta_0), \quad \Tilde{f}(\mathbf{x}; \theta_\infty) = f(\mathbf{x}; \theta_\infty) - f(\mathbf{x'}; \theta_0).
    \end{equation*}
    \noindent Using these definitions, the posterior mean of a zero-mean Gaussian process can be computed as:
    \begin{equation}
        \Tilde{f}(\mathbf{x'}, \theta_\infty) = \mathbf{K(x', x)}(\mathbf{K(x, x)} + N\beta_N\mathbf{I})^{-1}\mathbf{y}.
    \end{equation}
\end{restatable}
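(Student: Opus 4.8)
The plan is to treat this lemma as an immediate corollary of Lemma \ref{lemma: 2.1}, obtained by running the very same training procedure on a relabelled dataset. The key observation is that Lemma \ref{lemma: 2.1} holds for \emph{any} vector of training targets, and its hypotheses concern only the model and its initialization $\theta_0$, not the particular labels. I am therefore free to substitute the shifted labels $\tilde{\mathbf{y}} = \mathbf{y} + f(\mathbf{x}; \theta_0)$ in place of $\mathbf{y}$ while keeping the same $\theta_0$ and the same regularization strength $\beta_N$. Since the initialization is unchanged, the prior-mean terms $f(\mathbf{x}; \theta_0)$, $f(\mathbf{x'}; \theta_0)$ and the kernel blocks $\mathbf{K(x,x)}$, $\mathbf{K(x',x)}$ are all identical to those in the original statement.

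First I would invoke Lemma \ref{lemma: 2.1} with targets $\tilde{\mathbf{y}}$, yielding
\begin{equation*}
f(\mathbf{x'}; \theta_\infty) = f(\mathbf{x'}; \theta_0) + \mathbf{K(x', x)}(\mathbf{K(x, x)} + N\beta_N\mathbf{I})^{-1}(\tilde{\mathbf{y}} - f(\mathbf{x}; \theta_0)).
\end{equation*}
Next I would substitute the definition of $\tilde{\mathbf{y}}$ and record the exact cancellation in the residual,
\begin{equation*}
\tilde{\mathbf{y}} - f(\mathbf{x}; \theta_0) = \big(\mathbf{y} + f(\mathbf{x}; \theta_0)\big) - f(\mathbf{x}; \theta_0) = \mathbf{y},
\end{equation*}
so that the label shift is precisely engineered to remove the initialization residual that Lemma \ref{lemma: 2.1} leaves behind. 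Finally, moving $f(\mathbf{x'}; \theta_0)$ to the left-hand side and recognizing the definition of the shifted prediction $\tilde{f}(\mathbf{x'}; \theta_\infty) = f(\mathbf{x'}; \theta_\infty) - f(\mathbf{x'}; \theta_0)$ gives
\begin{equation*}
\tilde{f}(\mathbf{x'}; \theta_\infty) = \mathbf{K(x', x)}(\mathbf{K(x, x)} + N\beta_N\mathbf{I})^{-1}\mathbf{y},
\end{equation*}
which is exactly the posterior mean of a zero-mean NTK-GP.

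There is no genuine analytic obstacle here: the argument is a single substitution once Lemma \ref{lemma: 2.1} is in hand, and the only points requiring care are bookkeeping ones. I would make explicit that the prediction shift is applied at the test inputs $\mathbf{x'}$ (the statement as written mixes $\mathbf{x}$ and $\mathbf{x'}$ in the definition of $\tilde{f}$, and should read $\tilde{f}(\mathbf{x'}; \theta_\infty) = f(\mathbf{x'}; \theta_\infty) - f(\mathbf{x'}; \theta_0)$), so that the evaluation point of the prior-mean correction matches the query point. I would also remark that this transformation is an affine reparametrization of the problem: shifting the targets by the constant vector $f(\mathbf{x}; \theta_0)$ and subtracting a constant at query time leaves the training dynamics and the posterior covariance untouched, so that only the mean is affected, which is precisely what the lemma asserts.
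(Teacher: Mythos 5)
Your proof is correct and follows exactly the same route as the paper's: invoke Lemma \ref{lemma: 2.1} with the shifted targets, note that $\tilde{\mathbf{y}} - f(\mathbf{x};\theta_0) = \mathbf{y}$, then subtract the initial prediction at the query points. Your remark that the statement's definition of $\tilde{f}$ mixes $\mathbf{x}$ and $\mathbf{x'}$ (it should be evaluated consistently at the test inputs) correctly identifies a typo that the paper's own proof also carries.
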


\begin{algorithm}[ht]
\caption{Algorithm for Computing the Posterior Covariance in the NTK-GP}
\label{alg-cov}
\begin{algorithmic}
\Procedure{Train-Posterior-Covariance}{$x_i$, $K$, $\theta_0$} \Comment $K$ is the number of predictors 
\State $U, \Sigma \gets \textsc{Partial-SVD}(J_{\theta_0}({\mathbf x}), K)$ \Comment{Partial SVD of the Jacobian - see appendix \ref{appdx-svd}.} 
\For {$i=1,\dots,K$}
\State $\theta_i^\star \gets \textsc{Train-Posterior-Mean}(x_i, U_i)$ \Comment{$U_i$ is the $i$-th column of $U$.} 
\EndFor
\For {$i=1,\dots,K'$} \Comment{Setting $K'=0$ often works well (see Appendix \ref{appdx-ub}). }
\State ${\theta'}_i^\star \gets \textsc{Train-Posterior-Mean}(x_i, \epsilon_i)$ \Comment{ $\epsilon_i \sim \mathcal{N}(0,\sigma^2 I)$} 
\EndFor
\State \Return $\Sigma, \theta_1^\star, \dots, \theta_K^\star, {\theta'}_1^\star, \dots, {\theta'}_{K'}^\star$
\EndProcedure \\
\Procedure{Query-Posterior-Covariance}{$x'_j$, $\Sigma$, $\theta_i^\star$, ${\theta'}_i^\star$, $\theta_0$} \Comment{$j=1,\dots,J$}
\State $P \scriptscriptstyle \gets  \begin{bmatrix} \scriptscriptstyle
f(x'_1; \theta_1^\star) - f(x'_1; \theta_0)& \scriptscriptstyle \dots & \scriptscriptstyle f(x'_1; \theta_K^\star) - f(x'_1; \theta_0)\\
  & \scriptscriptstyle \dots &  \\
\scriptscriptstyle f(x'_J; \theta_1^\star) - f(x'_J; \theta_0) & \scriptscriptstyle \dots & \scriptscriptstyle f(x'_J; \theta_K^\star) - f(x'_1; \theta_0)\\
\end{bmatrix}\textstyle, \; P' \scriptscriptstyle \gets \begin{bmatrix}
\scriptscriptstyle f(x'_1; {\theta'}_1^\star) - f(x'_1; {\theta}_0)& \scriptscriptstyle \dots & \scriptscriptstyle f(x'_1; {\theta'}_{K'}^\star) - f(x'_1; {\theta}_0)\\
  & \scriptscriptstyle \dots &  \\
\scriptscriptstyle f(x'_J; {\theta'}_1^\star) - f(x'_J; {\theta}_0) & \scriptscriptstyle \dots & \scriptscriptstyle f(x'_J; {\theta'}_{K'}^\star) - f(x'_1; {\theta}_0)\\
\end{bmatrix}
$
\State \Return $J(\mathbf{x'})J(\mathbf{x'})^\top -  P \Sigma^2 P^\top - P' (P')^\top / K'$ \Comment{The last term vanishes for $K'=0$}
\EndProcedure
\end{algorithmic}
\end{algorithm}

\subsection{Estimating the Covariance}
\label{subs-cov}
We now justify Algorithm \ref{alg-cov} for estimating the posterior covariance. The main observation that allows us to derive our estimator comes from examining the term $\mathbf{K(x', x)}^\top (\mathbf{K(x, x)} + \sigma^2 \mathbf{I})^{-1} \mathbf{K(x', x)}$ in the posterior covariance formula \eqref{eq-pcov}. This is summarized in the following Proposition.

\begin{restatable}{proposition}{propcov}
\label{propcov}
Diagonalize $\mathbf{K(x, x)}$ so that $\mathbf{K(x, x)} = U \Lambda U^\top$. We have  
\begin{align*}
\mathbf{K(x', x)}^\top (\mathbf{K(x, x)} + \sigma^2 \mathbf{I})^{-1} \mathbf{K(x', x)} = (M U) \Lambda (M U)^\top + \sigma^2 M M^\top.  
\end{align*}
Here, $M = \mathbf{K(x', x)}^\top (\mathbf{K(x, x)} + \sigma^2 \mathbf{I})^{-1}$. 
\end{restatable}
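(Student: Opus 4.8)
The plan is to prove the identity by direct algebraic manipulation, reducing both sides to a common expression. First I would introduce the shorthand $A := (\mathbf{K(x, x)} + \sigma^2 \mathbf{I})^{-1}$ and $B := \mathbf{K(x', x)}$, so that the left-hand side reads $B^\top A B$ and the quantity in the statement becomes $M = B^\top A$. The only structural facts I need are that $A$ is symmetric, being the inverse of a symmetric matrix, and that the eigendecomposition supplies the factorization $U \Lambda U^\top = \mathbf{K(x, x)}$. In particular, I will not even need the orthogonality of $U$, only this factorization.

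Next I would expand the two terms on the right-hand side separately. For the first term, substituting $M = B^\top A$ and folding $U\Lambda U^\top$ back into $\mathbf{K(x,x)}$ gives $(MU)\Lambda(MU)^\top = M\,(U\Lambda U^\top)\,M^\top = M\,\mathbf{K(x, x)}\,M^\top = B^\top A\,\mathbf{K(x, x)}\,A B$, where I used $M^\top = A B$, which in turn relies on the symmetry of $A$. For the second term, a direct computation gives $\sigma^2 M M^\top = \sigma^2 B^\top A A B = B^\top A\,(\sigma^2 \mathbf{I})\,A B$.

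Adding the two expanded terms and factoring out the common $A$ on each side yields $B^\top A \bigl(\mathbf{K(x, x)} + \sigma^2 \mathbf{I}\bigr) A B$. The final step is the key cancellation: since $A = (\mathbf{K(x, x)} + \sigma^2 \mathbf{I})^{-1}$, we have $A\bigl(\mathbf{K(x, x)} + \sigma^2 \mathbf{I}\bigr)A = A$, so the entire expression collapses to $B^\top A B$, which is exactly the left-hand side.

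There is no genuine analytic obstacle here, as the result is purely an algebraic rearrangement; the only points requiring care are tracking the transposes correctly (which depends on the symmetry of $A$) and recognizing that the quadratic term and the noise term recombine precisely into the resolvent $A\bigl(\mathbf{K(x, x)} + \sigma^2 \mathbf{I}\bigr)A$. This recombination is the conceptual content of the Proposition: it shows that the subtraction term in the posterior covariance splits cleanly into a signal part weighted by the eigenvalues $\Lambda$ and an isotropic noise part of magnitude $\sigma^2$, which is what motivates the separate sets of predictors (the $U_i$ columns and the $\epsilon_i$ samples) in Algorithm \ref{alg-cov}.
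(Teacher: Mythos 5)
Your proof is correct and is essentially the paper's own argument run in reverse: the paper inserts $(\mathbf{K(x,x)}+\sigma^2\mathbf{I})^{-1}(\mathbf{K(x,x)}+\sigma^2\mathbf{I})$ into the left-hand side and splits $\mathbf{K(x,x)}+\sigma^2\mathbf{I}$ into $U\Lambda U^\top + \sigma^2\mathbf{I}$, whereas you recombine the two right-hand terms into $B^\top A(\mathbf{K(x,x)}+\sigma^2\mathbf{I})AB$ and collapse it; both hinge on the same cancellation and on $M^\top = AB$ via symmetry of the resolvent.
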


\begin{proof}
We can rewrite it as:
\begin{align*}
\mathbf{K(x', x)}^\top (\mathbf{K(x, x)} + \sigma^2 \mathbf{I})^{-1} \mathbf{K(x', x)} &= \\ \underbrace{\mathbf{K(x', x)}^\top (\mathbf{K(x, x)} + \sigma^2 \mathbf{I})^{-1}}_{M}  & (\mathbf{K(x, x)} + \sigma^2 \mathbf{I}) \underbrace{(\mathbf{K(x, x)} + \sigma^2 \mathbf{I})^{-1} \mathbf{K(x', x)}}_{M^\top}
\end{align*}
Denoting the term $\mathbf{K(x', x)}^\top (\mathbf{K(x, x)} + \sigma^2 \mathbf{I})^{-1}$ with $M$, this can be written as:
\begin{align*}
\mathbf{K(x', x)}^\top (\mathbf{K(x, x)} + \sigma^2 \mathbf{I})^{-1} \mathbf{K(x', x)} = (M U) \Lambda (M U)^\top + \sigma^2 M M^\top.  
\end{align*}
\end{proof}
The proposition is useful because the matrix $M$ appears in equation \eqref{eq-pmean}. Hence the matrix multiplication $MU$ is equivalent to estimating the posterior mean using algorithm \ref{alg-mean} where targets are given by the columns of the matrix $U$. Hence the term $(M U) \Lambda (M U)^\top $ can be computed by gradient descent. In order to derive a complete estimator of the covariance, we still need to deal with the term $\sigma^2 M M^\top$. We can either estimate this term by fitting random targets (which corresponds to setting $K'>0$ in algorithm \ref{alg-cov}) or accept an upper bound on the covariance, setting $K'=0$. We describe this in detail in Appendix \ref{appdx-ub}. 

\section{Experiment}
\vspace{-0.05in}
\begin{figure}[htbp]
     \centering
     
     \includegraphics[width=0.45\textwidth]{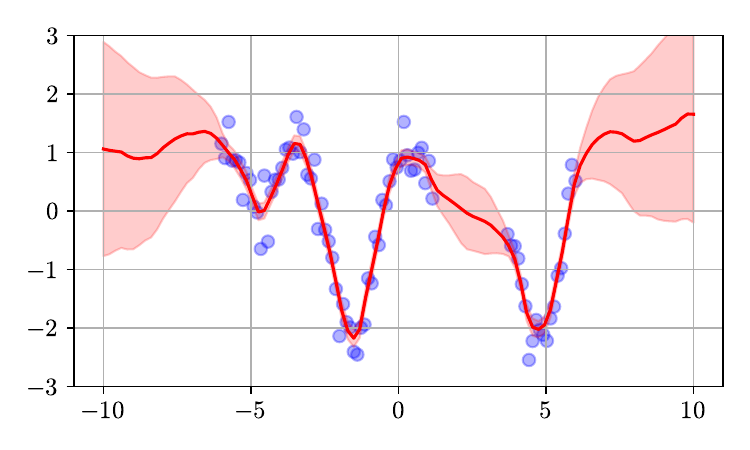}
     \includegraphics[width=0.45\textwidth]{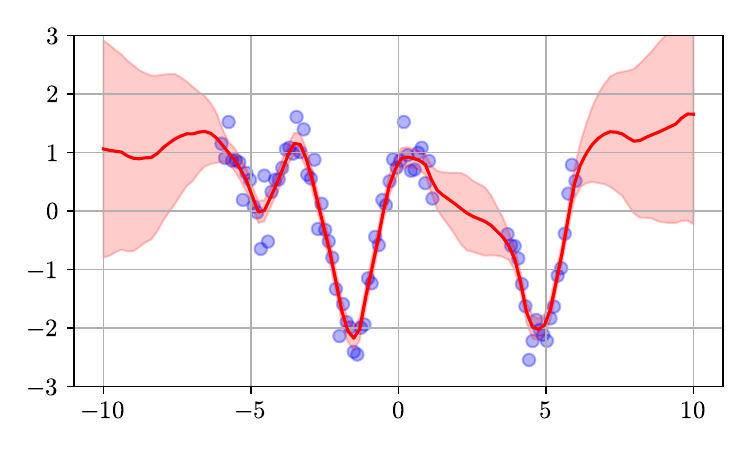} \\ \vspace{-0.1in}
     \includegraphics[width=0.45\textwidth]{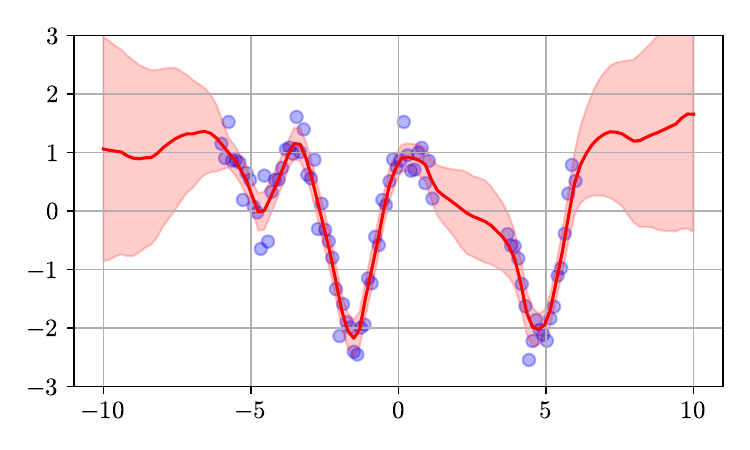}
     \includegraphics[width=0.45\textwidth]{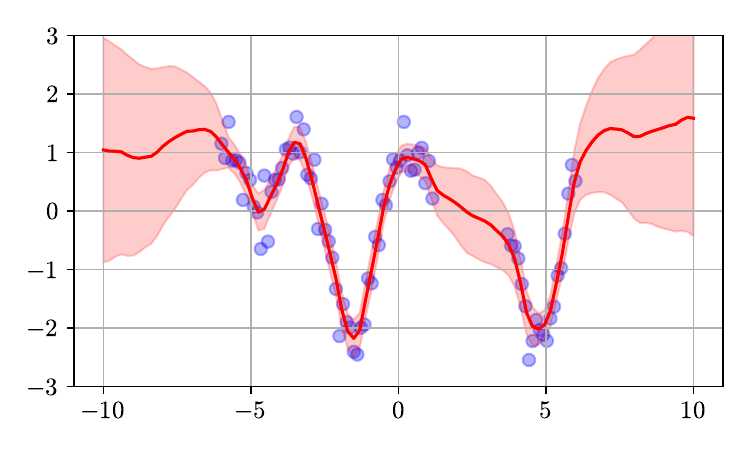}
     \caption{The NTK-GP posterior and its approximations: (top-left) Analytic Posterior, (top-right) Analytic upper bound on posterior (all eigenvectors), (bottom-left) Analytic upper bound on posterior (5 eigenvectors), (bottom-right) Posterior obtained with gradient descent ($K=5$ predictors, $K' = 0$).\vspace{-0.1in}}
     \label{fig-1}
\end{figure}

We applied the method to a toy regression problem shown in Figure \ref{fig-1}. The problem is a standard non-linear 1d regression task which requires both interpolation and extrapolation. The top-left figure was obtained by computing the kernel of the NTK-GP using formula \eqref{eq-ntk} and computing the posterior mean and covariance using equations \eqref{eq-pmean} and \eqref{eq-pcov}. The top-right figure was obtained by analytically computing the upper bound defined in appendix \ref{appdx-ub}. The bottom-left figure was obtained by taking the first 5 eigenvectors of the kernel. Finally, the bottom-right figure was obtained by fitting a mean prediction network and 5 predictor networks using the gradient-descent method described in algorithm \ref{alg-cov}. The similarity of the figures shows that the method works. Details of network architecture are deferred to Appendix \ref{appdx-exp}. 

\section{Conclusions}
This paper introduces a method for computing the posterior mean and covariance of NTK-Gaussian Processes with non-zero aleatoric noise. Our approach integrates seamlessly with standard training procedures using gradient descent, providing a practical tool for uncertainty estimation in contexts such as Bayesian optimization. The method has been validated empirically on a toy task, demonstrating its effectiveness in capturing uncertainty while maintaining computational efficiency. This work opens up opportunities for further research in applying NTK-GP frameworks to more complex scenarios and datasets.

\newpage
\printbibliography

@book{bhatia2015positive,
  title={Positive Definite Matrices},
  author={Bhatia, R.},
  isbn={9780691168258},
  series={Princeton Series in Applied Mathematics},
  url={https://books.google.co.uk/books?id=Y22YDwAAQBAJ},
  year={2015},
  publisher={Princeton University Press}
}

@article{jacot2018neural,
  title={Neural tangent kernel: Convergence and generalization in neural networks},
  author={Jacot, Arthur and Gabriel, Franck and Hongler, Cl{\'e}ment},
  journal={Advances in neural information processing systems},
  volume={31},
  year={2018}
}

@inproceedings{gal2016dropout,
  title={Dropout as a bayesian approximation: Representing model uncertainty in deep learning},
  author={Gal, Yarin and Ghahramani, Zoubin},
  booktitle={international conference on machine learning},
  pages={1050--1059},
  year={2016},
  organization={PMLR}
}

@article{daxberger2021laplace,
  title={Laplace redux-effortless bayesian deep learning},
  author={Daxberger, Erik and Kristiadi, Agustinus and Immer, Alexander and Eschenhagen, Runa and Bauer, Matthias and Hennig, Philipp},
  journal={Advances in Neural Information Processing Systems},
  volume={34},
  pages={20089--20103},
  year={2021}
}

@article{srivastava2014dropout,
  title={Dropout: a simple way to prevent neural networks from overfitting},
  author={Srivastava, Nitish and Hinton, Geoffrey and Krizhevsky, Alex and Sutskever, Ilya and Salakhutdinov, Ruslan},
  journal={The journal of machine learning research},
  volume={15},
  number={1},
  pages={1929--1958},
  year={2014},
  publisher={JMLR. org}
}

@inproceedings{ritter2018scalable,
  title={A scalable laplace approximation for neural networks},
  author={Ritter, Hippolyt and Botev, Aleksandar and Barber, David},
  booktitle={6th international conference on learning representations, ICLR 2018-conference track proceedings},
  volume={6},
  year={2018},
  organization={International Conference on Representation Learning}
}

@article{osband2023epistemic,
  title={Epistemic neural networks},
  author={Osband, Ian and Wen, Zheng and Asghari, Seyed Mohammad and Dwaracherla, Vikranth and Ibrahimi, Morteza and Lu, Xiuyuan and Van Roy, Benjamin},
  journal={Advances in Neural Information Processing Systems},
  volume={36},
  pages={2795--2823},
  year={2023}
}

@article{lakshminarayanan2017simple,
  title={Simple and scalable predictive uncertainty estimation using deep ensembles},
  author={Lakshminarayanan, Balaji and Pritzel, Alexander and Blundell, Charles},
  journal={Advances in neural information processing systems},
  volume={30},
  year={2017}
}

@article{tibshirani1993introduction,
  title={An introduction to the bootstrap},
  author={Tibshirani, Robert J and Efron, Bradley},
  journal={Monographs on statistics and applied probability},
  volume={57},
  number={1},
  pages={1--436},
  year={1993}
}

@inproceedings{blundell2015weight,
  title={Weight uncertainty in neural network},
  author={Blundell, Charles and Cornebise, Julien and Kavukcuoglu, Koray and Wierstra, Daan},
  booktitle={International conference on machine learning},
  pages={1613--1622},
  year={2015},
  organization={PMLR}
}

@article{kingma2013auto,
  title={Auto-encoding variational bayes},
  author={Kingma, Diederik P},
  journal={arXiv preprint arXiv:1312.6114},
  year={2013}
}

@book{williams2006gaussian,
  title={Gaussian processes for machine learning},
  author={Williams, Christopher KI and Rasmussen, Carl Edward},
  volume={2},
  number={3},
  year={2006},
  publisher={MIT press Cambridge, MA}
}

@article{lee2019wide,
  title={Wide neural networks of any depth evolve as linear models under gradient descent},
  author={Lee, Jaehoon and Xiao, Lechao and Schoenholz, Samuel and Bahri, Yasaman and Novak, Roman and Sohl-Dickstein, Jascha and Pennington, Jeffrey},
  journal={Advances in neural information processing systems},
  volume={32},
  year={2019}
}

@inproceedings{ciosek2019conservative,
  title={Conservative uncertainty estimation by fitting prior networks},
  author={Ciosek, Kamil and Fortuin, Vincent and Tomioka, Ryota and Hofmann, Katja and Turner, Richard},
  booktitle={International Conference on Learning Representations},
  year={2019}
}

@article{burda2018exploration,
  title={Exploration by random network distillation},
  author={Burda, Yuri and Edwards, Harrison and Storkey, Amos and Klimov, Oleg},
  journal={arXiv preprint arXiv:1810.12894},
  year={2018}
}

@article{osband2019deep,
  title={Deep exploration via randomized value functions},
  author={Osband, Ian and Van Roy, Benjamin and Russo, Daniel J and Wen, Zheng},
  journal={Journal of Machine Learning Research},
  volume={20},
  number={124},
  pages={1--62},
  year={2019}
}

@book{neal2012bayesian,
  title={Bayesian learning for neural networks},
  author={Neal, Radford M},
  volume={118},
  year={2012},
  publisher={Springer Science \& Business Media}
}

@misc{golikov2022neuraltangentkernelsurvey,
      title={Neural Tangent Kernel: A Survey}, 
      author={Eugene Golikov and Eduard Pokonechnyy and Vladimir Korviakov},
      year={2022},
      eprint={2208.13614},
      archivePrefix={arXiv},
      primaryClass={cs.LG},
      url={https://arxiv.org/abs/2208.13614}, 
}

@book{blum2020foundations,
  title={Foundations of data science},
  author={Blum, Avrim and Hopcroft, John and Kannan, Ravindran},
  year={2020},
  publisher={Cambridge University Press}
}

@article{halko2011finding,
  title={Finding structure with randomness: Probabilistic algorithms for constructing approximate matrix decompositions},
  author={Halko, Nathan and Martinsson, Per-Gunnar and Tropp, Joel A},
  journal={SIAM review},
  volume={53},
  number={2},
  pages={217--288},
  year={2011},
  publisher={SIAM}
}

@misc{hu2020simpleeffectiveregularizationmethods,
      title={Simple and Effective Regularization Methods for Training on Noisily Labeled Data with Generalization Guarantee}, 
      author={Wei Hu and Zhiyuan Li and Dingli Yu},
      year={2020},
      eprint={1905.11368},
      archivePrefix={arXiv},
      primaryClass={cs.LG},
      url={https://arxiv.org/abs/1905.11368}, 
}

@misc{he2020bayesiandeepensemblesneural,
      title={Bayesian Deep Ensembles via the Neural Tangent Kernel}, 
      author={Bobby He and Balaji Lakshminarayanan and Yee Whye Teh},
      year={2020},
      eprint={2007.05864},
      archivePrefix={arXiv},
      primaryClass={stat.ML},
      url={https://arxiv.org/abs/2007.05864}, 
}

\newpage
\appendix
\section{Related Work}
\label{appdx-related-work}

\paragraph{Neural Tangent Kernel} The definition of the Neural Tangent Kernel \eqref{eq-ntk}, the proof of the fact that it stays constant during training and doesn't depend on initialization as well as the link to Gaussian Processes with no aleatoric noise are all due to the seminal paper \cite{jacot2018neural}. The work of \citeauthor*{lee2019wide} builds on that, showing that wide neural networks can be understood as linear models for purposes of studying their training dynamics, a fact we crucially rely on in the proof of our Lemma \ref{lemma: 2.1}. \citeauthor*{hu2020simpleeffectiveregularizationmethods} describe a regularizer for networks trained in the NTK regime which leads to the same optimization problem used in our Lemma \ref{lemma: 2.1}. The difference lies in the fact that we rely on the Bayesian interpretation of the network obtained at the end of training, while they focus on a frequentist generalization bound. 

\paragraph{Predictor Networks} Prior work \parencite{osband2019deep, burda2018exploration, ciosek2019conservative} has considered epistemic uncertainty estimation by fitting functions generated using a process that includes some kind of randomness. \citeauthor{burda2018exploration} have applied a similar idea to reinforcement learning, obtaining exceptional results on Montezuma's Revenge, a problem where it is known that exploration very is hard. \citeauthor{ciosek2019conservative} provided a link to Gaussian Processes, but did not leverage the NTK, instead describing an upper bound on a posterior relative to the kernel \parencite{neal2012bayesian} where sampling corresponds to sampling from the network initialization. \citeauthor{osband2019deep} proposed\footnote{See Section 5.3.1 in the paper by \citeauthor{osband2019deep}.} a way of sampling from a Bayesian linear regression posterior by solving an optimization problem with a similar structure to ours. However, this approach is different in two crucial ways. First, \citeauthor{osband2019deep} is interested in obtaining samples from the posterior, while we are interested in computing the posterior moments. Second, the sampling process in the paper by \citeauthor{osband2019deep} depends on the true regression targets in a way that our posterior covariance estimate does not. Also, our method is framed differently, as we intend it to be used in the context of the NTK regime, while \citeauthor{osband2019deep} discusses vanilla linear regression.   

\paragraph{Epistemic Uncertainty} Our method of fitting the posterior covariance about network outputs can be thought of as quantifying epistemic uncertainty. There are several established methods in this space. Dropout \parencite{gal2016dropout, srivastava2014dropout}, works by randomly disabling neurons in a network and has a Bayesian interpretation. The Laplace approximation \parencite{daxberger2021laplace, ritter2018scalable} works by replacing an arbitrary likelihood with a Gaussian one. Epistemic neural networks \parencite{osband2023epistemic} are based on the idea of using an additional input (the epistemic index) when training the network. Deep ensembles \parencite{tibshirani1993introduction, lakshminarayanan2017simple} work by training several copies of a network with different initializations and sometimes training sets that are only partially overlapping. While classic deep ensembles do not have a Bayesian interpretation, \citeauthor*{he2020bayesiandeepensemblesneural} have recently proposed a modification that approximates the posterior in the NTK-GP. Bayesian Neural Networks \parencite{blundell2015weight, kingma2013auto} attempt to apply Bayes rule in the space of neural network parameters, applying various approximations. A full survey of methods of epistemic uncertainty estimation is beyond the scope of this paper. 

\section{Proofs}
\label{appdx-proof}

\propmean*
\begin{proof}
    Consider a regression problem with the following regularized empirical loss:
    \begin{equation}
        \mathcal{L}(\mathbf{y}, f(\mathbf{x}; \theta)) = \frac{1}{N}\sum_{i=1}^N (y_i - f(x_i; \theta))^2 + \beta_N || \theta - \theta_0 ||^2_2.
    \end{equation}
    Let us use $\theta_t$ to represent the parameters of the network evolving in time $t$ and let $\alpha$ be the learning rate. Assuming we train the network via continuous-time gradient flow, then the evolution of the parameters $\theta_t$ can be expressed as:
    \begin{equation}\label{eq: grad-flow}
        \frac{d\theta_t}{dt} = -\alpha \left[ \frac{2}{N} \nabla_\theta f(\mathbf{x}; \theta_t) (f(\mathbf{x}; \theta_t) - \mathbf{y}) + 2\beta_N (\theta_t - \theta_0) \right].
    \end{equation}
    Assuming that our neural network architecture operates in a sufficiently wide regime \cite{lee2019wide}, where the first-order approximation remains valid throughout gradient descent, we obtain:
    \begin{equation}
        f(\mathbf{x'}; \theta_t) = f(\mathbf{x'}; \theta_0) + J_t(\mathbf{x'})(\theta_t - \theta_0) \rightarrow \nabla_\theta f(\mathbf{x'}; \theta_t)^\top = J_t(\mathbf{x'}).
    \end{equation}
    \noindent The dynamics of the neural network on the training data:
    \begin{align*}
        \frac{df(\mathbf{x}; \theta_t)}{dt} &= J_t(\mathbf{x})\frac{d\theta_t}{dt} = -\frac{2\alpha}{N} J_t(\mathbf{x}) \left[ J_t(\mathbf{x})^\top (f(\mathbf{x}; \theta_t) - \mathbf{y}) + \beta_N (\theta_t - \theta_0) \right] \\
        &= -\frac{2\alpha}{N} \left( \mathbf{K(x, x)}(f(\mathbf{x}; \theta_t) - \mathbf{y}) + \beta_N J_t(\mathbf{x})(\theta_t - \theta_0) \right) \\
        &= -\frac{2\alpha}{N} \left( \mathbf{K(x, x)}(f(\mathbf{x}; \theta_t) - \mathbf{y}) + \beta_N (f(\mathbf{x}; \theta_t) - f(\mathbf{x}; \theta_0)) \right) \\
        &= -\frac{2\alpha}{N} \left( \mathbf{K(x, x)} + \beta_N\mathbf{I} \right) f(\mathbf{x}; \theta_t) + \frac{2\alpha}{N} \left( \mathbf{K(x, x)}\mathbf{y} + \beta_N f(\mathbf{x}; \theta_0) \right)
    \end{align*}
    \noindent This is a linear ODE, we can solve this:
    \begin{align*}
        f(\mathbf{x}; \theta_t) &= \exp \left( -\frac{2\alpha}{N}t \left( \mathbf{K(x, x)} + \beta_N\mathbf{I} \right) \right) f(\mathbf{x}; \theta_0) \\
        &- \frac{N}{2\alpha} \left( \mathbf{K(x, x)} + \beta_N\mathbf{I} \right)^{-1} \left[ \exp \left( -\frac{2\alpha}{N}t \left( \mathbf{K(x, x)} + \beta_N\mathbf{I} \right) \right) - \mathbf{I} \right] \\
        &\times \frac{2\alpha}{N} \left( \mathbf{K(x, x)}\mathbf{y} + \beta_N f(\mathbf{x}; \theta_0) \right)
    \end{align*}
    Using $A^{-1}e^A = e^A A^{-1}$, and writing $\mathbf{K(x,x)}y + \beta_N f(x,\theta_0) = (\mathbf{K(x,x)}+\beta_N I)f(x,\theta_0) + \mathbf{K(x,x)}(y-f(x,\theta_0))$, we get:
    \begin{align*}
        f(x,\theta_t) =& \exp\left(-\frac{2\alpha}{N}t (\mathbf{K(x,x)} + \beta_N I) \right) f(x,\theta_0) \\
        &+\left[I-\exp\left( -\frac{2\alpha}{N}t (\mathbf{K(x,x)}+\beta_N I) \right) \right] (\mathbf{K(x,x)}+\beta_N I)^{-1} (\mathbf{K(x,x)}y + \beta_N f(x,\theta_0)) \\
        =& \exp\left(-\frac{2\alpha}{N}t (\mathbf{K(x,x)} + \beta_N I) \right) f(x,\theta_0) + \left[I-\exp\left( -\frac{2\alpha}{N}t (\mathbf{K(x,x)}+\beta_N I) \right) \right] f(x,\theta_0) \\
        &+\left[I-\exp\left( -\frac{2\alpha}{N}t (\mathbf{K(x,x)}+\beta_N I) \right) \right] (\mathbf{K(x,x)}+\beta_N I)^{-1} \mathbf{K(x,x)}(y-f(x,\theta_0)) \\
        &= f(x,\theta_0) + \left[I-\exp\left( -\frac{2\alpha}{N}t (\mathbf{K(x,x)}+\beta_N I) \right) \right] (\mathbf{K(x,x)}+\beta_N I)^{-1} \mathbf{K(x,x)}(y-f(x,\theta_0)) .
    \end{align*}
    \noindent Now, we consider the dynamics for the neural network of an arbitrary set of test points $\mathbf{x'}$:
    \begin{equation}
        \frac{df(x',\theta_t)}{dt}  = -\frac{2\alpha}{N}\beta_N f(x',\theta_t) -\frac{2\alpha}{N}\left(\mathbf{K(x',x)}(f(x,\theta_t)-y) - \beta_N f(x',\theta_0) \right).
    \end{equation}
    \noindent This is a linear ODE with a time-dependent inhomogeneous term, we can solve it as follows:
    \begin{align*}
        f(x',\theta_t) =& e^{-\frac{2\alpha}{N}\beta_N t}f(x',\theta_0) -\frac{2\alpha}{N}e^{-\frac{2\alpha}{N}\beta_N t} \int_0^t e^{\frac{2\alpha}{N}\beta_N u} \left(\mathbf{K(x',x)}(f(x,\theta_u)-y) - \beta_N f(x',\theta_0) \right) du \\
        =& e^{-\frac{2\alpha}{N}\beta_N t}f(x',\theta_0) + \frac{2\alpha}{N}e^{-\frac{2\alpha}{N}\beta_N t} \int_0^t e^{\frac{2\alpha}{N}\beta_N u} du \left(\mathbf{K(x',x)}y + \beta_N f(x',\theta_0) \right) \\
        & - \frac{2\alpha}{N}e^{-\frac{2\alpha}{N}\beta_N t} \mathbf{K(x',x)} \int_0^t e^{\frac{2\alpha}{N}\beta_N u} f(x,\theta_u)du. \\
        =& e^{-\frac{2\alpha}{N}\beta_N t}f(x',\theta_0) + e^{-\frac{2\alpha}{N}\beta_N t} \frac{1}{\beta_N}\left(e^{\frac{2\alpha}{N}\beta_N t} - 1 \right)\left(\mathbf{K(x',x)}y + \beta_N f(x',\theta_0) \right) \\
        & - \frac{2\alpha}{N}e^{-\frac{2\alpha }{N}\beta_N t}\mathbf{K(x',x)} \int_0^t e^{\frac{2\alpha}{N}\beta_N u} f(x,\theta_0)du \\
        & - \frac{2\alpha}{N}e^{-\frac{2\alpha }{N}\beta_N t}\mathbf{K(x',x)} \int_0^t e^{\frac{2\alpha}{N}\beta_N u} \left[I - \exp\left(-\frac{2\alpha}{N}u (\mathbf{K(x,x)}+\beta_N I) \right) \right] du \\
        &\times (\mathbf{K(x,x)}+\beta_N I)^{-1} \mathbf{K(x,x)}(y-f(x,\theta_0)). \\
        &= f(x',\theta_0) + \frac{1}{\beta_N} (1-e^{\frac{2\alpha}{N}\beta_N t})\mathbf{K(x',x)}y - \frac{1}{\beta_N}(1-e^{-\frac{2\alpha}{N}\beta_N t} )\mathbf{K(x',x)}f(x,\theta_0)\\
        &  - \frac{2\alpha}{N}e^{-\frac{2\alpha}{N}\beta_N t} \mathbf{K(x',x)} \left[\frac{N}{2\alpha\beta}(e^{\frac{2\alpha}{N}\beta_N t} -1 )I - \frac{N}{2\alpha\beta} \mathbf{K(x,x)}^{-1} \left(\exp\left(-\frac{2\alpha}{N}t\mathbf{K(x,x)} \right) -I\right)\right] \\
        &\times (\mathbf{K(x,x)}+\beta_N I)^{-1} \mathbf{K(x,x)}(y-f(x,\theta_0)) \\
        &= f(x',\theta_0) + \frac{1}{\beta_N} (1-e^{\frac{2\alpha}{N}\beta_N t})\mathbf{K(x',x)}(y-f(x,\theta_0)) \\
        & - \frac{1}{\beta}\mathbf{K(x',x)} \left[(1-e^{-\frac{2\alpha}{N}\beta_N t})I - \mathbf{K(x,x)}^{-1} \left(\exp\left(-\frac{2\alpha}{N}t (\mathbf{K(x,x)}+\beta_N I) \right) - e^{-\frac{2\alpha}{N}\beta_N t}I\right) \right] \\
        &\times (\mathbf{K(x,x)}+\beta_N I)^{-1} \mathbf{K(x,x)}(y-f(x,\theta_0)).
    \end{align*}
    \noindent Lastly, taking $t\to\infty$, we get
    \begin{align*}
        f(x',\theta_{\infty}) &= f(x',\theta_0) + \frac{1}{\beta_N}\mathbf{K(x',x)}(y-f(x,\theta_0)) -\frac{1}{\beta_N} \mathbf{K(x',x)} (\mathbf{K(x,x)}+\beta_N I)^{-1} \mathbf{K(x,x)}(y-f(x,\theta_0)) \\
        &= f(x',\theta_0) + \frac{1}{\beta_N} \mathbf{K(x',x)} \left(I - (\mathbf{K(x,x)}+\beta_N I)^{-1}\mathbf{K(x,x)} \right)(y-f(x,\theta_0)) \\
        &= f(x',\theta_0) + \mathbf{K(x',x)}(\mathbf{K(x,x)}+\beta_N I)^{-1}(y-f(x,\theta_0)),
    \end{align*}
    we achieve the desired result and hence having a regularized gradient flow in the infinite-width limit is equivalent to inferring the mean posterior of a non-zero aleatoric noise NTK-GP.
\end{proof}

\onlylemma*

\begin{proof}
    Firstly, substituting $\Tilde{\mathbf{y}}$ into $\mathbf{y}$:

    \begin{align*}
        f(\mathbf{x'}; \theta_\infty) &= f(\mathbf{x'}; \theta_0) + \mathbf{K(x', x)} \left( \mathbf{K(x, x)} + N\beta_N\mathbf{I} \right)^{-1} (\Tilde{\mathbf{y}} - f(\mathbf{x}; \theta_0)) \\
        &= f(\mathbf{x'}; \theta_0) + \mathbf{K(x', x)} \left( \mathbf{K(x, x)} + N\beta_N\mathbf{I} \right)^{-1} \mathbf{y}
    \end{align*}

    \noindent Now, using this new computational process, scaling it as $\Tilde{f}(\mathbf{x}; \theta_\infty)$:

    \begin{align*}
        \Tilde{f}(\mathbf{x}; \theta_\infty) &= f(\mathbf{x}; \theta_\infty) - f(\mathbf{x'}; \theta_0) = \mathbf{K(x', x)} \left( \mathbf{K(x, x)} + N\beta_N\mathbf{I} \right)^{-1} \mathbf{y},
    \end{align*}

    \noindent achieving the desired zero-mean Gaussian process.
\end{proof}

\section{Details of the Experimental Setup}
\label{appdx-exp}
The Adam optimizer was used whenever our experiments needed gradient descent. A patience-based stopping rule was used where training was stopped if there was no improvement in the loss for 500 epochs. The other hyperparameters are given in the table below. 
\begin{center}
\begin{tabular}{l l}
    \bf hyperparameter & \bf value\\ \hline
    no of hidden layers & 2 \\
    size of hidden layer & 512 \\
    non-linearity & softplus \\
    softplus beta & 87.09 \\
    scaling multiplier in the output & 3.5 \\
    learning rate for network predicting mean & 1e-4 \\
    learning rate for covariance predictor networks & 5e-5 
\end{tabular}
\end{center}

Moreover, we used trigonometric normalization, where an input point $x$ is first scaled and shifted to lie between 0 and $\pi$, obtaining a normalized point $x'$. The point $x'$ is then represented with a vector $\left[ \sin(x'), \cos(x') \right] $.

\section{Details on Estimating The Covariance}
\label{appdx-ub}
We now describe two of dealing with the term $\sigma^2 M M^\top$ in the covariance formula. Upper bounding the covariance is described in Section \ref{ss-ub}, while estimating the exact covariance by fitting noisy targets is described in Section \ref{ss-noise}. 

\subsection{Upper Bounding the Covariance}
\label{ss-ub}
First, we can simply ignore the term in our estimator, obtaining an upper bound on the covariance. We now characterize the tightness of the upper bound, i.e. the magnitude of the term
\[
\sigma^2 M M^\top = \sigma^2 \mathbf{K(x',x)} (\mathbf{K(x, x)} + \sigma^2\mathbf{I})^{-1}  (\mathbf{K(x, x)} + \sigma^2\mathbf{I})^{-1} \mathbf{K(x',x)}^\top.
\]
We do this is the following two lemmas.
\begin{lemma}
    When $\mathbf{x} = \mathbf{x'}$, i.e.\ on the training set, we have 
    \[
    \sigma^2 \mathbf{K(x',x)} (\mathbf{K(x, x)} + \sigma^2\mathbf{I})^{-1}  (\mathbf{K(x, x)} + \sigma^2\mathbf{I})^{-1} \mathbf{K(x',x)}^\top \preccurlyeq \sigma^2 {\mathbf{I}}. 
    \]
\end{lemma}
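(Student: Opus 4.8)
The plan is to exploit the fact that on the training set all three kernel blocks collapse to the single symmetric positive semidefinite matrix $\mathbf{K(x,x)}$, which I will abbreviate as $K$. First I would substitute $\mathbf{x}=\mathbf{x'}$, so that $\mathbf{K(x',x)} = K$ and $\mathbf{K(x',x)}^\top = K$ by symmetry of the Gram matrix. This reduces the quantity to be bounded to $\sigma^2 K (K+\sigma^2\mathbf{I})^{-1}(K+\sigma^2\mathbf{I})^{-1} K$, and the goal becomes showing that this is $\preccurlyeq \sigma^2\mathbf{I}$.

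The central observation is that $K$ and $(K+\sigma^2\mathbf{I})^{-1}$ are both functions of $K$ and therefore commute, so the matrix $A := K(K+\sigma^2\mathbf{I})^{-1}$ is itself symmetric. This lets me rewrite the expression as $\sigma^2 A A^\top = \sigma^2 A^2$, whence it suffices to prove $A^2 \preccurlyeq \mathbf{I}$, i.e.\ that every eigenvalue of $A$ lies in $[-1,1]$.

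To finish, I would diagonalize $K = U\Lambda U^\top$ with eigenvalues $\lambda_i \ge 0$ (using that the NTK Gram matrix is positive semidefinite). Then $A = U\,\Lambda(\Lambda+\sigma^2\mathbf{I})^{-1} U^\top$ has eigenvalues $\lambda_i/(\lambda_i+\sigma^2)$, each lying in $[0,1)$ because $\lambda_i \ge 0$ and $\sigma^2 > 0$. Squaring keeps them in $[0,1)$, so $A^2 \preccurlyeq \mathbf{I}$ and hence $\sigma^2 A^2 \preccurlyeq \sigma^2\mathbf{I}$, which is exactly the claim.

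I expect no genuine obstacle here: the only step requiring care is the commutation argument that turns the two-sided product into the single square $\sigma^2 A^2$ and thereby exposes the scalar bound $0 \le \lambda/(\lambda+\sigma^2) < 1$. Everything else is routine bookkeeping, and the strictness of $<1$ even shows the inequality is never tight for $\sigma^2 > 0$.
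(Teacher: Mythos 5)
Your proposal is correct and follows essentially the same route as the paper: substitute $\mathbf{K(x',x)}=\mathbf{K(x,x)}=\mathbf{K}$, diagonalize $\mathbf{K}=U\Lambda U^\top$, and reduce the claim to the scalar bound $\lambda_i^2/(\lambda_i+\sigma^2)^2\le 1$. The intermediate commutation remark about $A=\mathbf{K}(\mathbf{K}+\sigma^2\mathbf{I})^{-1}$ is a harmless repackaging of the same diagonalization argument.
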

\begin{proof}
    By assumption, $\mathbf{K(x',x)} = \mathbf{K(x,x)} = \mathbf{K}$. Denote the diagonalization of $\mathbf{K}$ with $\mathbf{K} = \mathbf{U \Lambda U^\top} $. We have 
    \begin{align*}
        \sigma^2 &\mathbf{K(x',x)} (\mathbf{K(x, x)} + \sigma^2\mathbf{I})^{-1}  (\mathbf{K(x, x)} + \sigma^2\mathbf{I})^{-1} \mathbf{K(x',x)}^\top \\ = \sigma^2 & \mathbf{K} (\mathbf{K} + \sigma^2\mathbf{I})^{-2}  \mathbf{K}^\top \\ =
        \sigma^2 & \mathbf{U \Lambda U^\top} (\mathbf{U \Lambda U^\top} + \sigma^2\mathbf{I})^{-2}  \mathbf{U \Lambda U^\top} \\=
        \sigma^2 & \mathbf{U \Lambda U^\top} \mathbf{U} (\mathbf{\Lambda} + \sigma^2\mathbf{I})^{-2} \mathbf{U}^\top   \mathbf{U \Lambda U^\top} \\=
        \sigma^2 & 
        \mathbf{U \Lambda} (\mathbf{\Lambda} + \sigma^2\mathbf{I})^{-2} \mathbf{ \Lambda U^\top}.
    \end{align*}
    It can be seen that the diagonal entries of $\mathbf{\Lambda} (\mathbf{\Lambda} + \sigma^2\mathbf{I})^{-2} \mathbf{ \Lambda}$ are less than or equal one.
\end{proof}
The Lemma above, stated in words, implies that, on the training set, the variance estimates that come from using the upper bound (which doesn't require us to fit noisy targets as in Section \ref{ss-noise}) are off by at most $\sigma^2$. 

We now give another Lemma, which characterizes the upper bound on arbitrary test points, not just the training set.

\begin{lemma}
Denote by $\lambda_{\text{max}}$ the maximum singular value of $\mathbf{K(x',x')}$. Then we have   
\[
\left \|  \sigma^2 \mathbf{K(x',x)} (\mathbf{K(x, x)} + \sigma^2\mathbf{I})^{-1}  (\mathbf{K(x, x)} + \sigma^2\mathbf{I})^{-1} \mathbf{K(x',x)}^\top \right \|_2 \leq \frac14 \lambda_{\text{max}}. 
\]
\end{lemma}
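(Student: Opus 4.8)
The plan is to exploit the defining structure of the NTK, namely that $\mathbf{K(x,x)} = J(\mathbf{x})J(\mathbf{x})^\top$, $\mathbf{K(x',x)} = J(\mathbf{x'})J(\mathbf{x})^\top$, and $\mathbf{K(x',x')} = J(\mathbf{x'})J(\mathbf{x'})^\top$, to reduce the matrix inequality to a one-dimensional calculus fact. Writing $B := \sigma^2 \mathbf{K(x',x)}(\mathbf{K(x,x)}+\sigma^2\mathbf{I})^{-1}(\mathbf{K(x,x)}+\sigma^2\mathbf{I})^{-1}\mathbf{K(x',x)}^\top$, I would first note that $B$ is positive semidefinite (it has the form $AA^\top$), so its spectral norm equals its largest eigenvalue, and it therefore suffices to establish the Loewner bound $B \preccurlyeq \tfrac14 \mathbf{K(x',x')}$; taking $\|\cdot\|_2$ of both sides then yields the claim, since $\mathbf{K(x',x')}$ is PSD and its top singular value is exactly $\lambda_{\text{max}}$.

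Substituting the Jacobian factorization gives $B = J(\mathbf{x'})\, G\, J(\mathbf{x'})^\top$ with the inner $p \times p$ matrix $G := \sigma^2 J(\mathbf{x})^\top (\mathbf{K(x,x)}+\sigma^2\mathbf{I})^{-2} J(\mathbf{x})$. The core step is to show $G \preccurlyeq \tfrac14 \mathbf{I}$. I would do this via the singular value decomposition $J(\mathbf{x}) = U S V^\top$: then $\mathbf{K(x,x)} = U S S^\top U^\top$ has eigenvalues $s_i^2$, the $U$ factors cancel inside $(\mathbf{K(x,x)}+\sigma^2\mathbf{I})^{-2}$, and a direct computation collapses $G$ to $V\, \mathrm{diag}\!\big(\tfrac{\sigma^2 s_i^2}{(s_i^2+\sigma^2)^2}\big)\, V^\top$. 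The whole problem then rests on the scalar inequality $\tfrac{\sigma^2 a}{(a+\sigma^2)^2}\le \tfrac14$ for $a \ge 0$, which is immediate from AM--GM, since $(a+\sigma^2)^2 \ge 4a\sigma^2$, with equality at $a=\sigma^2$. Hence every eigenvalue of $G$ is at most $\tfrac14$, so $G \preccurlyeq \tfrac14\mathbf{I}$.

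Finally I would sandwich: since $G \preccurlyeq \tfrac14 \mathbf{I}$, conjugation preserves the order, giving $B = J(\mathbf{x'}) G J(\mathbf{x'})^\top \preccurlyeq \tfrac14 J(\mathbf{x'}) J(\mathbf{x'})^\top = \tfrac14 \mathbf{K(x',x')}$, and the result follows by monotonicity of the spectral norm on PSD matrices. I expect the main obstacle to be purely notational rather than conceptual: keeping the SVD bookkeeping straight so that the nonzero entries of $S^\top(SS^\top+\sigma^2\mathbf{I})^{-2}S$ are verified to be exactly $\tfrac{\sigma^2 s_i^2}{(s_i^2+\sigma^2)^2}$, after which everything reduces to the elementary AM--GM bound. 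A minor point worth stating explicitly is that the argument uses only that the kernel is a Gram matrix of a common feature/Jacobian map shared across $\mathbf{x}$ and $\mathbf{x'}$ (equivalently, that the joint kernel matrix is PSD), which holds for the NTK by construction.
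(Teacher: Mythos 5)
Your proof is correct, but it takes a genuinely different route from the paper's. The paper invokes Proposition 1.3.2 of Bhatia to write the cross-covariance as $\mathbf{K(x',x)}^\top = \mathbf{K(x,x)}^{1/2}\,\mathbf{C}\,\mathbf{K(x',x')}^{1/2}$ for a contraction $\mathbf{C}$, pulls out $\lambda_{\text{max}}$ via submultiplicativity, and then reduces to bounding $\bigl\| \mathbf{\Lambda}^{1/2}(\mathbf{\Lambda}+\sigma^2\mathbf{I})^{-2}\mathbf{\Lambda}^{1/2} \bigr\|_2 = \max_i \lambda_i/(\lambda_i+\sigma^2)^2 \le 1/(4\sigma^2)$. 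You instead use the explicit NTK Gram factorization $\mathbf{K}=JJ^\top$ to collapse the whole expression to $J(\mathbf{x'})\,G\,J(\mathbf{x'})^\top$ with $G = \sigma^2 J(\mathbf{x})^\top(\mathbf{K(x,x)}+\sigma^2\mathbf{I})^{-2}J(\mathbf{x}) \preccurlyeq \tfrac14\mathbf{I}$, and then conjugate. Both arguments ultimately rest on the same scalar AM--GM fact $\sigma^2 a/(a+\sigma^2)^2\le\tfrac14$. Your version buys two things: it avoids the external citation (the contraction lemma is replaced by the elementary observation that the joint kernel matrix is a Gram matrix of a shared feature map, which you correctly note is equivalent), and it yields the slightly stronger Loewner-order conclusion $B \preccurlyeq \tfrac14\,\mathbf{K(x',x')}$ rather than only the operator-norm bound. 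The paper's version is phrased so that it applies verbatim to any jointly PSD kernel without reference to an explicit feature map, but as you point out this is not a real gain in generality. Your SVD bookkeeping for $S^\top(SS^\top+\sigma^2\mathbf{I})^{-2}S$ and the monotonicity steps are all sound.
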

\begin{proof}
By Proposition 1.3.2 from the book by \citeauthor{bhatia2015positive}, we have that 
\[
\mathbf{K(x',x)}^\top = \mathbf{K(x,x)}^{1/2} \mathbf{C} \mathbf{K(x',x')}^{1/2}, 
\]
where $\mathbf{C}$ is a contraction. Denote the diagonalization of $\mathbf{K(x,x)}$ with $\mathbf{K(x,x)} = \mathbf{U \Lambda U^\top} $. We have 
\begin{align*}
    & \left \|  \sigma^2 \mathbf{K(x',x)} (\mathbf{K(x, x)} + \sigma^2\mathbf{I})^{-2}   \mathbf{K(x',x)}^\top \right \|_2 \\ =
    & \left \|  \sigma^2 \mathbf{K(x',x')}^{1/2} \mathbf{C}^\top \mathbf{K(x,x)}^{1/2} (\mathbf{K(x, x)} + \sigma^2\mathbf{I})^{-2}   \mathbf{K(x,x)}^{1/2} \mathbf{C} \mathbf{K(x',x')}^{1/2} \right \|_2 \\ \leq
    & \sigma^2 \lambda_{\text{max}} \left \| \mathbf{K(x,x)}^{1/2} (\mathbf{K(x, x)} + \sigma^2\mathbf{I})^{-2}   \mathbf{K(x,x)}^{1/2} \right \|_2  \\ =
    & \sigma^2 \lambda_{\text{max}} \left \| \mathbf{U} \mathbf{\Lambda}^{1/2} \mathbf{U}^\top \mathbf{U} (\mathbf{\Lambda} + \sigma^2\mathbf{I})^{-2}\mathbf{U^\top} \mathbf{U} \mathbf{\Lambda}^{1/2} \mathbf{U}^\top \right \|_2 \\ =
    & \sigma^2 \lambda_{\text{max}} \left \| \mathbf{\Lambda}^{1/2}  (\mathbf{\Lambda} + \sigma^2\mathbf{I})^{-2} \mathbf{\Lambda}^{1/2} \right \|_2.
\end{align*}
We can expand $\left \| \mathbf{\Lambda}^{1/2}  (\mathbf{\Lambda} + \sigma^2\mathbf{I})^{-2} \mathbf{\Lambda}^{1/2} \right \|_2$ as $\max_i \left\{ \frac{\lambda_i}{(\lambda_i + \sigma^2)^2} \right\} \leq \frac1{4\sigma^2}$, which gives the desired result.
\end{proof}

\subsection{Exact Covariance by Fitting Noisy Targets}
\label{ss-noise}
In certain cases, we might not be satisfied with having an upper bound on the posterior covariance, even if it is reasonably tight. We can address these scenario by fitting additional predictor networks, trained on targets sampled from the spherical normal. Formally, we have
\begin{gather*}
    \sigma^2 MM^\top = M \mathbb{E}_\epsilon \left[ 
\epsilon \epsilon^\top \right] M^\top, 
\end{gather*}
where $\epsilon \sim \mathcal{N}(0,\sigma^2 I)$. We can take $K'$ samples $\epsilon_1, \dots, \epsilon_{K'}$, obtaining
\begin{gather}
\label{eq-noise}
M \mathbb{E}_\epsilon \left[ 
\epsilon \epsilon^\top \right] M^\top \approx
\frac1{K'} \sum_i M \epsilon_i \epsilon_i^\top M^\top = \frac1{K'} \sum_i (M \epsilon_i) (M \epsilon_i)^\top,
\end{gather}
where the approximation becomes exact by the law of large numbers as $K' \rightarrow \infty$. Since the multiplication $M \epsilon_i$ is equivalent to estimating the posterior mean with algorithm \ref{alg-mean}, we can perform the computation in equation \eqref{eq-noise} by gradient descent.   

\section{Computing The Partial SVD}
\label{appdx-svd}
Our Algorithm \ref{alg-cov} includes the computation of the partial SVD of the Jacobian:
\begin{algorithmic}
\centering \State $U, \Sigma \gets \textsc{Partial-SVD}(J_{\theta_0}({\mathbf x}), K)$. 
\end{algorithmic}    
We require an SVD which is partial in the sense that we only want to compute the first $K$ singular values. For the regression experiment in this submission, we simply called the full SVD on the Jacobian and took the first $K$ columns of $U$ and the first $K$ diagonal entries of $\Sigma$. This process is infeasible for larger problem instances.

This can be addressed by observing that the power method for SVD computation \cite{blum2020foundations} only requires computing Jacobian-vector products and vector-Jacobian products, which can be efficiently computed in deep learning frameworks without access to the full Jacobian. Another approach that avoids constructing the full Jacobian is the use of randomized SVD \cite{halko2011finding}. We leave the implementation of these ideas to further work. 

\section{Network Initialization}
\label{appdx-init}
We consider a neural network model $f(x; \theta)$, where $\theta \in \mathbb{R}^p$ denotes the set of parameters. The model consists of $L$ layers with dimensions $\{n_0, n_1, \dots, n_L\}$, where $n_0$ is the input dimension and $n_L$ is the output dimension. Note that, as we want to leverage the theory of wide networks, the number of neurons in the hidden layers, $\{n_2, \dots, n_{L-1}\}$, is large.

For each fully connected layer $l$, the weight matrix $W^{(l)} \in \mathbb{R}^{n_l \times n_{l-1}}$ and the bias vector $b^{(l)} \in \mathbb{R}^{n_l}$ are initialized from a Gaussian distribution with mean zero and standard deviations $\sigma_w$ and $\sigma_b$, respectively:
\[
W^{(l)}_{ij} \sim \mathcal{N}(0, \sigma_w^2), \quad b^{(l)}_j \sim \mathcal{N}(0, \sigma_b^2),
\]
where $\sigma_w$ and $\sigma_b$ are fixed values set as hyperparameters during initialization (we use $\sigma_w = \sigma_b = 1$).

The network uses a non-linear activation function $\sigma: \mathbb{R} \rightarrow \mathbb{R}$ with bounded second derivative, ensuring Lipschitz continuity. The output of each layer $l$ is scaled by $1/\sqrt{n_l}$ to maintain the appropriate magnitude, particularly when considering the infinite-width limit:
\[
a^{(l)} = \sigma\left(\frac{1}{\sqrt{n_{l}}}W^{(l)}a^{(l-1)} + b^{(l)}\right),
\]
where $a^{(l)}$ is the output of layer $l$, and $a^{(0)} = x$ is the input to the network.

The final layer output is further scaled by a constant factor $c_{\text{out}}$ to ensure that the overall network output remains within the desired range. Specifically, the output $f(x; \theta)$ is given by:
\[
f(x; \theta) = \frac{c_{\text{out}}}{\sqrt{n_{L}}} W^{(L)} a^{(L-1)},
\]
where $c_{\text{out}}$ is a predefined constant that ensures the final output is of the appropriate scale. In our model, $c_{\text{out}}$ is set to 3.5. For the hidden layers, we choose $\sigma(\cdot)$ to be Softplus $-$ a smoothed version of ReLU. In this case, an additional scaling factor $\beta$ is introduced to modulate the sharpness of the non-linearity:
\[
a^{(l)} = \sigma\left(\frac{1}{\sqrt{n_{l}}}W^{(l)}a^{(l-1)} + b^{(l)}; \beta\right).
\]
In our model, we set $\beta = 87.09$ for the Softplus activation to ensure the appropriate range of activation values. The process described above is standard. We followed closely the methodology provided in several works in the literature \cite{jacot2018neural}\cite{lee2019wide}\cite{golikov2022neuraltangentkernelsurvey}. This initialization strategy ensures that the network's activations and gradients do not explode or vanish as the number of neurons $n_l$ increases.
\end{document}